\numberwithin{equation}{section}
\title[Kernel regression, minimax rates and effective dimensionality]{Kernel regression, minimax rates and effective dimensionality: beyond the regular case}
\author{Gilles Blanchard
        \and Nicole M\"{u}cke \\
        }
\address{Institute of Mathematics, University of Potsdam, Karl-Liebknecht-Straße 24-25 14476 Potsdam, Germany}
\email{\{blanchard,nmuecke\}@uni-potsdam.de}
\date{\today}
\theoremstyle{plain}
\newtheorem{theo}{Theorem}[section]
\newtheorem{lem}[theo]{Lemma}%[section]
\newtheorem{prop}[theo]{Proposition}%[section]
\theoremstyle{definition}
\theoremstyle{remark}
\newcommand{\cal}{\mathcal}
\newcommand{\F}{{\cal F}}
\newcommand{\G}{{\cal G}}
\newcommand{\EE}{{\mathbb{E}}}
\newcommand{\PP}{{\mathbb{P}}}
\newcommand{\RR}{{\mathbb{R}}}
\newcommand{\NN}{{\mathbb{N}}}
\newcommand{\lam}{\lambda}
\newcommand{\D}{\cal D}
\newcommand{\h}{{\cal H}}
\renewcommand{\H}{{\cal H}}
\newcommand{\X}{{\cal X}}
\renewcommand{\O}{{\cal O}}
\newcommand{\prf}{\begin{proof}} %{ {\bf Proof: \\}}
\newcommand{\prfend}{\end{proof}} %{\begin{flushright}{\small $\Box$}\end{flushright}}
\newcommand{\la}{\langle}
\newcommand{\ra}{\rangle}
\newcommand{\x}{{\bf x}}
\newcommand{\y}{{\bf y}}
\newcommand{\z}{{\bf z}}
\newcommand{\Z}{{\cal Z}}
\newcommand{\M}{{\cal M}}
\newcommand{\K}{{\cal K}}
\newcommand{\R}{{\cal R}}
\newcommand{\N}{{\cal N}}
\renewcommand{\P}{{\cal P}}
\newcommand{\eps}{\varepsilon}
\newcommand{\pfeil}{\longrightarrow}
\newcommand{\eigv}{\mu}
\newcommand{\eigf}{\psi}
\newcommand{\eigup}{{\bf ($\mbox{Eigen}^{<}(\nu^{*})$)}} 
\newcommand{\eiglow}{{\bf ($\mbox{Eigen}^{>}(\nu_{*})$)}} 
\newcommand{\SC}{{\bf SC($r,R$)}}
\newcommand{\bernstein}{ {\bf Bernstein($M, \sigma$)}}
\newcommand{\rad}{\pi}
\newcommand{\nux}{\nu}
\newcommand{\fo}{f_{\PP}^*}
\newcommand{\tr}{\mathrm{tr}}
\newcommand{\wh}{\widehat}
\newcommand{\paren}[1]{\left(#1\right)}
\newcommand{\norm}[1]{\left\|#1\right\|}
\newcommand{\set}[1]{\left\{#1\right\}}
\newcounter{nbdrafts}
\newcommand{\checknbdrafts}{
\ifnum \thenbdrafts > 0
\@latex@warning@no@line{**********************************************************************}
\@latex@warning@no@line{* The document contains \thenbdrafts \space draft note(s)}
\@latex@warning@no@line{**********************************************************************}
\fi}
\begin{document}

\maketitle

\begin{abstract}
  We investigate if kernel regularization methods can achieve minimax convergence rates
  over a source condition regularity assumption for the target function.
  These questions have been considered in past literature, but only under specific assumptions
  about the decay, typically polynomial, of the spectrum of the
  the kernel mapping covariance operator. In the perspective of distribution-free results,
  we investigate this issue under much weaker assumption on the eigenvalue decay, allowing
  for more complex behavior that can reflect different structure of the data at different scales.
\end{abstract}

\section{Introduction and motivation}

\subsection{Kernel regression}

We are concerned with the classical regression learning problem,
where we observe training data $\D:=(X_i,Y_i)_{i=1,\ldots,n}$\,, assumed to be an i.i.d. sample
from a distribution $\PP$ over $\X \times \RR$\, ( $\nux$ will denote the marginal
distribution of $X$), and the goal is to estimate
the regression function $f^*(x) :=\EE[Y|X=x]$\,. We consider the well-established setting
of (reproducing) kernel learning: we assume a positive semi-definite kernel $k(.,.)$ has
been defined on $\X$\,, with associated canonical feature mapping $\Phi: \X  \rightarrow \H$ 
into a corresponding reproducing kernel Hilbert space $\H$\,. A classical methodology
to handle this problem is kernel ridge regression, depending on a regularization parameter
$\lam>0$\,, giving rise to the estimate $\wh{f}_\lam$\,; 
we will consider a much larger class of kernel methods below, but restrict ourselves
to that particular example for the sake of the present introduction.

A common goal of learning theory is to give upper bounds for the convergence of $\wh{f}_\lam$
to $f^*$\,, and derive rates of convergence (as $n\rightarrow \infty$) under appropriate
assumptions on the ``regularity'' of $f^*$\,.
In this paper, te notion of convergence we will
consider is the usual squared $L^2(\nux)$ distance with respect to the sampling distribution,
$\|\wh{f}_\lam-f^*\|^2_{2,\nux} = \EE\big[\big( \wh{f}_\lam(X) - f^*(X) \big)^2\big]$\,;
it is well-known
that this is equal to the excess risk with respect to Bayes when using the squared loss, i.e.
\[
\|\wh{f}_\lam-f^*\|^2_{2,\nux} = \EE\big[\big( Y - \wh{f}_\lam(X)\big)^2\big]  -
  \min_{f: \X \rightarrow \RR} \EE\big[\big( Y -  f(X)\big)^2\big]\,.
 \]
 More precisely, we are interested in bounding the averaged above error over the
 draw of the training data (this is also called Mean Integrated Squared Error or MISE in the
 statistics literature). We will also consider convergence estimates in the (stronger)
 averaged $\H$-norm.

\subsection{Minimax error in classical nonparametrics}
 
When upper bounds or convergence rates for a specific method are obtained, it is natural to ask
to what extent they can be considered optimal.
%i.e. that no other estimation method can do much better 
%This question only makes sense if we consider convergence rates that hold over a whole
%family of possible candidates for the target function instead of for any specific one --
%indeed a constant estimator $\wh{f}=f_0$ not depending
%on the data would be unbeatable when $f_0$ happens to be the regression function, but useless
%in all other cases (an instance of the proverbial sentence ``a broken clock is right twice a day'').
The classical yardstick is the notion of minimax error over a set of candidate
$\P$ (hypotheses) for the data generating distribution:
\begin{equation}
\R(\P,n) := \inf_{\wh{f}} \sup_{\PP \in \P} \EE_{\D \sim \PP^{\otimes n}}
      \big[ \|\wh{f}-f^*_{\PP}\|^2_{2,\nux} \big] \,,
\end{equation}
where the $\inf$ operation is over all estimators, and we added an index $\PP$ to
$f^*$ to emphasize its dependence on the data generating distribution.

In the nonparametric statistics literature, it is commonly assumed that $\X$ is some compact
set of $\RR^d$\,, the sampling distribution $\nux$ has an upper bounded density with respect
to the Lebesgue measure, and the type of regularity considered for the target function
is a Sobolev-type regularity, i.e., that the target function $f^*$ has squared-integrable $r$-th
derivative. This is equivalent to say that $f^*$ belongs to a Sobolev ellipsoid of radius
$R$\,,
\begin{equation}
\label{eq:sobolev1}
f^* \in \Big\{ f: \sum_{i\geq 1} i^{-\frac{2r}{d}} f_i^2 \leq R^2 \Big\}\,,
\end{equation}
where $f_i$ denotes the coefficients of $f$ in a (multidimensional) trigonometric function basis.
Minimax rates in such context are known to be of the order $\O(n^{-\frac{2r}{2r+d}})$\,,
and can be attained for a variety of classical procedures \cite{Sto82,tsybakov}.

\subsection{Minimax error in a distribution-free context}

In the statistical learning context, the above assumptions are unsatisfying. The first reason is that
learning using kernels is often applied to non-standard spaces, for instance graphs,
text strings, or even probability distributions (see, e.g., \cite{CriSha04}). There is often no ``canonical'' notion of
regularity of a function on such spaces, nor a canonical reference measure
which would take the role of the Lebesgue measure in $\RR^d$\,. The second reason is that learning 
theory focuses on a distribution-free approach, i.e., avoiding specific assumptions on the generating
distribution. By contrast, it is a very strong assumption to posit that the sampling distribution $\nux$ is
dominated by some reference measure (be it Lebesgue or otherwise), especially for non-standard spaces,
or in $\RR^d$ if the dimension $d$ is large. In the latter case, the convergence rate 
$\O(n^{-\frac{2r}{2r+d}})$ becomes very slow (curse of dimensionality), yet it is often noticed in practice
that many kernel-based methods perform well. The reason is that for high-dimensional data, more often
than not the sampling distribution $\nux$ is actually concentrated on some lower-dimensional structure,
so that the assumption of $\nux$ having bounded density in $\RR^d$ is violated: convergence rates
could then be much faster. For these reasons, it has been proposed %in the statistical learning literature 
to consider regularity classes for the target function having a form similar to \eqref{eq:sobolev1}, 
but reflecting implicitly the geometry corresponding to the choice of the kernel and to the sampling distribution.
More precisely, denote $B_\nu$ (uncentered) covariance
operator of the kernel feature mapping $\Phi(X)$ and $(\eigv_{\nu,i},\eigf_{\nu,i})_{i\geq 1}$ an eigendecomposition of $B_\nu$\,. 
For $r,R>0$\,, introduce the class
\begin{equation}
\label{eq:source}
\Omega(\nu,r,R) := \Big\{ f \in \H: \sum_{i\geq 1} \eigv_{\nux,i}^r f_i^2 \leq R^2 \Big\} = B_\nu^r B(\H,R) \,,
\end{equation}
where $B(\H,R)$ is the ball of radius $R$ in $\H$\,, and $f_i:=\langle f, \psi_{\nux,i} \rangle$ are the coefficients
of $f$ in the eigenbasis. Clearly, \eqref{eq:source} has a form similar to \eqref{eq:sobolev1}, but
in a basis and scaling that reflects the properties of the distribution of $\Phi(X)$\,.
If the target function $f^*$ is well approximated in this basis in the sense that its coefficients
decay fast enough in comparison to the eigenvalues, it is considered as regular in this
geometry (higher regularity corresponds to higher values of $r$ and/or lower values of $R$).
This type of regularity class, also called {\em source condition}, have been considered in a
statistical learning context by \cite{cusmale}, and \cite{optimalratesRLS} have established upper bounds for
the performance of kernel ridge regression $\wh{f}_\lam$ over such classes; this has been extended to other
types of kernel regularization methods by \cite{optimalrates,DicFosHsu15,BlaMuc16}. These bounds rely on 
tools introduced in the seminal work of \cite{Zha05}, and depend in particular on the notion
of {\em effective dimension} of the data with respect to the regularization parameter $\lam$\,,
defined as
\begin{equation}
\label{def:eff_dim}
\N(\lam) := \tr( (B_\nu + \lam)^{-1} B_\nu) = \sum_{i\geq 1} \frac{\eigv_{\nux,i}}{\eigv_{\nux,i} + \lam}\,. 
\end{equation}
As before, the next question of importance is whether such upper bounds can be proved to
be minimax optimal over the classs $\Omega(\nu,r,R)$\,, assuming the regularization parameter
$\lam$ is tuned appropriately. This question has been answered positively when
a polynomial decay of the eigenvalues, $\mu_{v,i} \asymp i^{-b}$\,, is assumed ($\asymp$ stands for upper and lower bounded up to a constant). In this case $\N(\lam)$ can be evaluated,
and for an appropriate choice of $\lam$\,, the upper bound can be matched by a corresponding
lower bound. The first comprehensive result in this direction was established by
\cite{optimalratesRLS}\,; \cite{BlaMuc16} give a sharp estimate of the convergence rate in this case
including the dependence on the parameters $R$ and noise variance $\sigma$\,, namely
$\O\Big( R^2 \big(\frac{\sigma^2}{R^2n}\big)^{\frac{2r+1}{2r+1+\nicefrac{1}{b}}} \Big)\,.$

\subsection{Contributions: beyond regular decay of eigenvalues}

The assumption of polynomially decaying eigenvalues yields explicit minimax
convergence rates, relates closely \eqref{eq:source} to \eqref{eq:sobolev1}, and ensures
that kernel methods can achieve those optimal rates. Yet, once again it
is unsatisfying from a distribution-free point of view. Remember the stucture of the eigenvalues
reflects the covariance of the feature mapping $\Phi(X)$; for complex data, there is no
strong reason to expect that their decay should be strictly polynomial, and we would like to
cover behavior as general as possible for the eigenvalue decay, for instance:
\begin{itemize}
\item decay rates including other slow varying functions, such as $\mu_{v,i} \asymp i^{-b} (\log i)^c (\log \log i)^d$\,;
\item eigenvalue sequence featuring plateaus separated by relative gaps;
\item shifting of switching along the sequence between different polynomial-type regimes,
\end{itemize}
which all might correspond to diffent type of structure of the data at different scales.
With the distribution-free point of view in mind, we therefore try to characterize minimax rates
for target function classes of the form \eqref{eq:source}, striving for assumptions as weak as possible
on the eigenvalue sequence. More importantly, we investigate if kernel methods also achieve minimax rates
in this general case.

%\ the kernel integral operator $f \mapsto \int k(x,.) f(x) d\nux(x)$ from
%$\H$ to itself,

%%%%%%%%%%%%%%%%%%%%%%%%%%%%%%%%%%%%%%%%%%%%%%%%%%%%%%%%%%%
%%%%%%%%%%%%%%%%%%%%%%%%%%%%%%%%%%%%%%%%%%%%%%%%%%%%%%%%%%%

\section{Setting}

We let $\Z = \X \times \R$ denote the sample space, where the input space $\X$ is a standard Borel space endowed with a fixed unknown probability measure $\nux$.  
The kernel space $\h$ is assumed to be separable, equipped with a measurable positive semi-definite kernel $k$, bounded by $\kappa$, implying 
continuity of the inclusion map $I_{\nu} : \h \pfeil L^2(\nu)$. Moreover, we consider the covariance operator 
$B_{\nu}=I^*_{\nu}I_{\nu}=\EE [ \Phi (X) \otimes \Phi (X) ] $, which can be shown to be positive self-adjoint trace class (and hence is compact). 
Given a sample $\x=(x_1, \ldots, x_n) \in \X^n$, we define the sampling operator $S_{\x}: \h \pfeil \R^n$ by $(S_{\x}f)_i= \la f, \Phi(x_i) \ra_{\h}$. The empirical 
covariance operator is given by $B_{\x}=S_{\x}^*S_{\x}$. Throughout we denote by $\eigv_j$ the positive eigenvalues of $B_{\nu}$ in decreasing order, satisfying $0 < \eigv_{j+1} \leq \eigv_{j}$ for all $j>0$ and 
$\eigv_j \searrow 0$. For any $t>0$ we let    
\begin{equation}
\F (t):= \#\{j \in \NN: \mu_j \geq t\} \;. 
\end{equation}
Note that $\F$ is left-continuous and decreasing as $t$ grows with $\F(t)\equiv 0$ for any $t > \kappa^2$\,, and $\F$ has limit $+\infty$
in $0^+$. Given $r>0$, 
we set $\G(t):= \frac{t^{2r+1}}{\F(t)}$ (possibly taking the value $\infty$ if $\F(t)=0$), which is left-continous and increasing on $(0,\kappa^2]$ with $\G(0^+) =0$.
Define the generalized inverse for any $u>0$ by
\begin{equation}
    \label{eq:definverse}
  \G^{-1}(u) := \max\set{t: \G(t) \leq u}\,.
\end{equation}
Some properties of $\F, \G$ and $\G^{-1}$ are collected in Lemma \ref{lem:prop_FG}. 

In order to present our main results, we require some assumptions on the learning problem:   

(1) Noise Assumption: The sampling is assumed to be random i.i.d., where each observation point $(X_i,Y_i)$ follows the model
$ Y = f(X) + \eps \,.$ We will make the following Bernstein-type assumption on the observation noise distribution: For any integer $m \geq 2$ 
and some $\sigma > 0$ and $M>0$: \\
{\bernstein: $\EE[\; \eps ^{m} \; | \; X \;] \leq \frac{1}{2}m! \; \sigma^2 M^{m-2}$ \; $\nux - {\rm a.s.}$\,.

(2) Assumption on eigenvalues: For any $j$ sufficiently large and some $\nu_{*}\geq \nu^{*}\geq 1$ \\

\eigup :  $ \eigv_{2j} / \eigv_j  \leq 2^{-\nu^{*}}$,  \qquad  \eiglow :  $ \eigv_{2j}/ \eigv_j \geq 2^{-\nu_{*}} $ \; . \\
%\eigup :  $ \frac{\eigv_{2j}}{\eigv_j} \leq 2^{-\nu^{*}}$  \qquad  \eiglow :  $ \frac{\eigv_{2j}}{\eigv_j} \geq 2^{-\nu_{*}} $ \; . \\

(3) The regularity  of the target function $\fo$ is measured in terms of a source condition:\\ 
\SC : $\fo \in B_\nu^r B(\H,R)$, where $r>0$, $R>0$. 

(4) Model: Let $\theta=(M, \sigma, R) \in \R^3_+$. The class $\M_{\theta}:=\M(\theta , r, \nu)$ consists of all distributions $\PP$ with fixed 
$X$-marginal $\nu$ and conditional distribution of $Y$ given $X$
%(satisfying at least one of the {\bf Eigen}- conditions)
satisfying \bernstein\, for the deviations\, and \SC \, for the mean.  

(5) Regularization: We find our estimator for $\fo$ via some linear spectral regularization function $g_{\lam}: [0, 1] \longrightarrow \R$, 
satisfying the following conditions for any $0 < \lam \leq 1$:
\[  \sup_{0<t\leq 1}|tg_{\lam}(t)| \leq D\; , \quad \sup_{0<t\leq 1}|g_{\lam}(t)| \leq \frac{E}{\lam}, \quad \sup_{0<t\leq 1}| r_{\lam}(t)  | \leq \gamma_0 \;,  \]
where $r_{\lam}(t)=1-g_{\lam}(t)t$, for some $D<\infty$, $E<\infty$ and $\gamma_0 <\infty$. The  qualification of the regularization $\{g_{\lam}\}_{\lam}$ is the maximal $q$ such that for any $0<\lam\leq 1$
\begin{equation*}
\sup_{0<t\leq 1}|r_{\lam}(t)|t^{q} \leq \gamma_{q}\lam^{q},
\end{equation*}
for some constant $\gamma_q>0$\,. 

These conditions originally characterize regularization methods for finding stable solutions for ill-posed inverse problems (\cite{engl}). It has been shown in e.g. \cite{rosasco} and \cite{learning} that precisely these methods are very well applicable in the context of learning theory.

Given a sample $\z=(\x, \y)\in (\X \times \R)^n$, we define the estimator $f_{\z}^{\lam}$ for a suitable a-priori parameter choice $\lam = \lam_n$  by 
\begin{equation}
\label{estimator}
f_{\z}^{\lam_n}:=  g_{\lam_n}(\kappa^{-2} B_{\x})\kappa^{-2} S_{\x}^{\star}\y =
g_{\lam_n}(\bar B_{\x})\bar S_{\x}^{\star}\y \; ,
\end{equation}
where we have introduced the shortcut notation $\bar B_{x} := \kappa^{-2} B_{\x}$
and $ \bar S_{\x} := \kappa^{-2} S_{\x}$\,.

%%%%%%%%%%%%%%%%%%%%%%%%%%%%%%%%%%%%%%%%%%%%%%%%%%%%%%%%%%%
%%%%%%%%%%%%%%%%%%%%%%%%%%%%%%%%%%%%%%%%%%%%%%%%%%%%%%%%%%%

\section{Main results}

Under the assumptions of the previous section, we establish that the order of the minimax rates is  given by the following quantities:

\begin{equation}
\label{rateseq}
 a^{\h}_{n,\theta} := R \G^{-1}\left(\frac{\sigma^2}{R^2n} \right)^r\; ,  \qquad  
 a^{L^2(\nu)}_{n,\theta} := R \G^{-1}\left(\frac{\sigma^2}{R^2n} \right)^{r + \frac{1}{2}}
\end{equation}

\begin{theo}[Minimax lower rate]
\label{minimaxlowerrate}
Let $r>0$ be fixed and assume $\nu$ satisfies \eiglow . Then the sequence  $(a^{\h}_{n,\theta})_n$ defined in \eqref{rateseq} is a  minimax lower rate of convergence  
for the RKHS-norm $\h$, for the model family $\M_{\theta} := \M(\theta , r)$\,, $\theta :=(M,\sigma , R) \in \R_+^3$, i.e. 
there exists $\tau > 0$ (possibly depending on $r$ and $\nu_{*}$, but not on $\theta$) such that
\begin{equation}
\label{minilow}
 \liminf_{n \to \infty} \inf_{f_{\bullet}}\sup_{\PP \in {\M_\theta}}
 \PP^{\otimes n}(\;  || \fo - f_{\z} ||_{\h} > \tau a^{\h}_{n,\theta} \;) > 0 , 
\end{equation}     
where the infimum is taken over all estimators. Similarly, $\eqref{minilow}$ also holds w.r.t. to $L^2(\nu)$-norm with minimax lower rate $a^{L^2(\nu)}_{n,\theta}$ 
over the same model family. 
%\begin{equation}
%\label{minilow}
% \inf_{\theta \in \R_+^3} 
%\liminf_{n \to \infty} \inf_{f_{\bullet}}\sup_{\PP \in {\M_\theta}}
%     \frac{\EE_{\z \sim\PP^{\otimes n}} \brac{\norm{  f_{\PP}^* - f_{\z}  }_{\h}^p}^{\frac{1}{p}}}{a^{\h}_{n,\theta}}  > c_{r} \,, 
%\end{equation}         
%where the infimum is taken over all estimators.
\end{theo}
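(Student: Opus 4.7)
The plan is to use Fano's method (reduction to multiple hypothesis testing) applied to a finite family of candidate regression functions inside the source-condition ball $B_\nu^r B(\H,R)$, with Gaussian observation noise (which satisfies \bernstein\ for suitable $M,\sigma$). Throughout, $\lambda_\ast$ will denote $\G^{-1}(c_0 \sigma^2/(R^2 n))$ for a small enough absolute constant $c_0 > 0$ to be fixed at the end; the definition of $\G^{-1}$ gives $\G(\lambda_\ast) \leq c_0 \sigma^2/(R^2 n)$, that is $\lambda_\ast^{2r+1} \leq c_0\, \F(\lambda_\ast)\, \sigma^2/(R^2 n)$.

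The first step is to exhibit a ``dyadic block'' of eigenvalues comparable to $\lambda_\ast$. Set $j_0 := \F(\lambda_\ast)$ and $J := \{j_0+1,\dots,2j_0\}$, so that $|J|=j_0=:N$. By construction $\mu_i < \lambda_\ast$ for $i \in J$, while assumption \eiglow\ applied at $j_0$ gives $\mu_{2j_0} \geq 2^{-\nu_\ast}\mu_{j_0} \geq 2^{-\nu_\ast}\lambda_\ast$, hence $2^{-\nu_\ast}\lambda_\ast \leq \mu_i \leq \lambda_\ast$ uniformly over $i \in J$. Next, I invoke the Varshamov--Gilbert lemma on $\{-1,+1\}^N$ to extract a subset $\Sigma \subset \{-1,+1\}^N$ with $|\Sigma| \geq 2^{N/8}$ and pairwise Hamming distance at least $N/8$. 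To each $\sigma \in \Sigma$, I associate
\[
 f_\sigma := \frac{R}{\sqrt{N}}\sum_{i \in J} \sigma_i\, \mu_i^{r}\, \psi_{\nu,i},
\]
which lies in $B_\nu^r B(\H,R)$ because expressing $f_\sigma = B_\nu^{r} g_\sigma$ with $g_\sigma = (R/\sqrt{N})\sum_{i\in J} \sigma_i \psi_{\nu,i}$ gives $\|g_\sigma\|_\H^2 = R^2$. The distributions $\PP_\sigma \in \M_\theta$ are then given by the fixed $X$-marginal $\nu$ and $Y = f_\sigma(X)+\eps$, with $\eps \sim \N(0,\sigma_0^2)$ for a sufficiently small Gaussian variance $\sigma_0^2 \lesssim \sigma^2$.

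The separation computation is routine: for $\sigma \neq \sigma'$ in $\Sigma$,
\[
 \|f_\sigma - f_{\sigma'}\|_\H^2 = \frac{4R^2}{N}\sum_{i\in J,\,\sigma_i\neq \sigma_i'} \mu_i^{2r} \geq \frac{4R^2}{N}\cdot \frac{N}{8} \cdot (2^{-\nu_\ast}\lambda_\ast)^{2r} = \frac{1}{2}R^2\, 2^{-2r\nu_\ast}\lambda_\ast^{2r},
\]
and analogously $\|f_\sigma - f_{\sigma'}\|_{L^2(\nu)}^2 \gtrsim R^2\, 2^{-(2r+1)\nu_\ast}\,\lambda_\ast^{2r+1}$, which are of order $(a^\H_{n,\theta})^2$ and $(a^{L^2(\nu)}_{n,\theta})^2$ respectively. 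For the information bound, independence of the $n$ samples and the Gaussian KL formula give
\[
 \mathrm{KL}(\PP_\sigma^{\otimes n} \| \PP_{\sigma'}^{\otimes n}) = \frac{n}{2\sigma_0^2}\|f_\sigma - f_{\sigma'}\|_{L^2(\nu)}^2 \leq \frac{2n R^2}{\sigma_0^2} \cdot \lambda_\ast^{2r+1} \leq \frac{2c_0 n R^2}{\sigma_0^2}\cdot \frac{\sigma^2 \F(\lambda_\ast)}{R^2 n} = \frac{2c_0 \sigma^2}{\sigma_0^2}\, N,
\]
using $\mu_i \leq \lambda_\ast$ on $J$ together with the defining inequality of $\lambda_\ast$. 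Choosing $\sigma_0^2$ a fixed fraction of $\sigma^2$ and then $c_0$ small enough, this is bounded by $\tfrac{1}{16}\log|\Sigma|$, so the standard Fano inequality (e.g.\ Tsybakov, Thm.~2.5) yields a uniform lower bound bounded away from zero for the probability of error of any test, and hence of any estimator, at the required separation scale.

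The step I expect to be delicate is not the Fano manipulation itself but verifying that the dyadic block $J$ is genuinely populated and that the definition $\lambda_\ast = \G^{-1}(c_0\sigma^2/(R^2 n))$ is admissible for all $n$ large enough: one must argue via Lemma~\ref{lem:prop_FG} that as $n\to\infty$, $c_0 \sigma^2/(R^2 n) \to 0$, $\lambda_\ast \to 0$, and $N = \F(\lambda_\ast)\to\infty$, so that Varshamov--Gilbert and the block construction become meaningful; in particular one has to rule out pathologies where $\F$ has very large plateaus or jumps near $\lambda_\ast$. This is exactly where assumption \eiglow\ is used, since it forces $\F(2^{-\nu_\ast}\lambda)\geq 2\F(\lambda)$ for small enough $\lambda$ and therefore controls the local growth of $\F$. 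Once this is in place, matching constants and transferring the probability bound into the \textit{liminf}~form of \eqref{minilow} is straightforward, and the argument for the $L^2(\nu)$-norm rate is identical with the separation lower bound above.
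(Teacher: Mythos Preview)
Your proposal is correct and follows essentially the same route as the paper: Fano's method via Varshamov--Gilbert on a dyadic block $\{m+1,\ldots,2m\}$ of eigenfunctions, with Gaussian noise and \eiglow\ used to sandwich the eigenvalues on that block. The only cosmetic differences are that the paper parametrizes by the separation scale $\eps$ (then sets $m=\F(2^{\nu_*}(\eps/R)^{1/(r+s)})$) rather than by $\lambda_\ast$, and chooses the coefficients of $g_i$ proportional to $\mu_l^{-(r+s)}$ so that the $\|B^s\cdot\|$-separation is exactly $\eps^2$; your extra free parameter $\sigma_0$ is unnecessary (the paper simply takes $\sigma_0=\sigma$, $M=\sigma$) since the slack can be absorbed into $c_0$ alone.
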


\begin{theo}[Upper rate]
\label{maintheo3}
Assume \SC \,, \bernstein \, and that $\nu$ satisfies \eigup \,. Consider the model $\M_{\theta}$, where 
$\theta:=(M,\sigma ,R) \in \R^3_+$.  
Given a sample $\z \in \Z^n$, define $f_{\z}^{\lam}$ as in \eqref{estimator},
using a regularization function of qualification $q \geq r $, with the parameter sequence
\begin{equation}
  \label{paramrule}
   \lam_{n, \theta} = \min\paren{ \G^{-1}\left(\frac{\sigma^2}{R^2n} \right) ,1} \; .
\end{equation}
Then there exists $n_0>0$ (depending on the above parameters) such that for any $n \geq n_0$ 
%for any $\eta \in (0,1)$ and for all $n$ sufficiently large,  
with probability at least $1-\eta$ \,:
%the sequence $(a^{\h}_{n,\theta})_n $ defined in \eqref{rateseq} is an upper rate of convergence for the RKHS- norm $\h$, for the sequence of estimated solutions $(f_{\z}^{\lam_{n,\theta}})$ over the model family  $(\M_{\theta})_{\theta) \in \R_+^3}$\,, i.e. 
%(possibly depending on all occuring parameters) with probability at least $1-\eta$\,:
\begin{equation}
\label{eq:reduxed_new}
   ||\fo - f_{\z}^{\lam_n}||_{\h}  \leq   C_{r, \nu^*} \; \log(8\eta^{-1})\, a^{\h}_{n,\theta}   \, ,  
\end{equation}
provided $\log{(8\eta^{-1})}\leq C_{r, \nu^*}\lam_n^{-r}$. 
%and where the constant $C_{r, \nu^*}$ does not depend on $\theta$.   
%\begin{equation}
%\label{up_rate}
% \sup_{\theta \in \R_+^3} \limsup_{n \to \infty} \sup_{\PP \in {\M_{\theta}}}  
%\frac{\EE_{\z \sim \PP^{\otimes n}}\big[ \|f_{\PP}^* - f_{\z}^{\lam_{n,\theta}} \|_{\h}^p \big]^{\frac{1}{p}}}{a_{n,\theta }} < \infty \,.
%\end{equation}
Similarly, the sequence $ a^{L^2(\nu)}_{n,\theta}$ also fulfills $\eqref{eq:reduxed_new}$ in $L^2({\nu})$-norm with 
identical parameter sequence $\eqref{paramrule}$, using a regularization function with qualification $q\geq r+\frac{1}{2}$.
% over the same model family.
\end{theo}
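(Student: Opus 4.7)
The plan is to follow the classical bias--variance decomposition for spectral regularization of kernel methods, injecting as the new ingredient a control of the effective dimension $\N(\lam)$ by $\F(\lam)$ under the irregular-decay assumption \eigup, and calibrating the final rate through the generalized inverse $\G^{-1}$ rather than by an explicit polynomial formula. Write
\[
f^{\lam}_{\z} - \fo \;=\; g_\lam(\bar B_x)\bigl(\bar S_x^{\star}\y - \bar B_x \fo\bigr) \,-\, r_\lam(\bar B_x) \fo \;=:\; V_\lam - A_\lam,
\]
and bound $\|V_\lam\|_{\h}$ (variance) and $\|A_\lam\|_{\h}$ (bias) separately.

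Standard Hilbert-space Bernstein inequalities applied to $\bar B_x - \bar B_\nu$ and to the noise residual $\bar S_x^{\star}(\y - S_x \fo)$ yield, on an event of probability $\geq 1-\eta$: (i) $\|(\bar B_\nu+\lam)^{1/2}(\bar B_x+\lam)^{-1/2}\|\leq \sqrt{2}$ provided $n\lam \geq c\,\N(\lam)\log(1/\eta)$; (ii) $\|(\bar B_\nu+\lam)^{-1/2}(\bar S_x^{\star}\y - \bar B_x \fo)\|_{\h} \leq c\log(1/\eta)\bigl(\sigma\sqrt{\N(\lam)/n} + M/(n\sqrt{\lam})\bigr)$; and (iii) by an operator-perturbation argument, $\|\bar B_\nu^r (\bar B_x+\lam)^{-r}\|\leq C$. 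These are essentially the estimates of \cite{optimalratesRLS,BlaMuc16}. Invoking \SC as $\fo = B_\nu^r w$ with $\|w\|\leq R$ together with the qualification $q\geq r$ and (iii), one obtains $\|A_\lam\|_{\h} \leq \|r_\lam(\bar B_x)(\bar B_x+\lam)^r\|\cdot \|(\bar B_x+\lam)^{-r} B_\nu^r\|\cdot R \leq C R \lam^r$. For the variance, the spectral bound $\|g_\lam(\bar B_x)(\bar B_x+\lam)^{1/2}\|\leq C/\sqrt{\lam}$ (derived from $|tg_\lam(t)|\leq D$ and $|g_\lam(t)|\leq E/\lam$) combined with (i) and (ii) gives $\|V_\lam\|_{\h}\leq C\log(1/\eta)\bigl(\sigma\sqrt{\N(\lam)/(n\lam)}+M/(n\lam)\bigr)$.

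The decisive new step is the estimate $\N(\lam)\leq C_{\nu^{*}}\F(\lam)$ under \eigup. Split $\N(\lam)=\sum_{\mu_j\geq\lam}\mu_j/(\mu_j+\lam)+\lam^{-1}\sum_{\mu_j<\lam}\mu_j$. The first sum is at most $\F(\lam)$; for the tail, group indices $j>\F(\lam)$ into dyadic blocks of size $\F(\lam)$ and apply \eigup iteratively to force the eigenvalues in block $k$ to be at most $2^{-k\nu^{*}}\lam$, producing a geometric series of total mass $C_{\nu^{*}}\F(\lam)\lam$. Substituting $\lam = \lam_n = \G^{-1}(\sigma^2/(R^2n))$ and using the defining inequality $\F(\lam_n)\lam_n^{-(2r+1)} \leq R^2 n/\sigma^2$ (from Lemma \ref{lem:prop_FG}) gives
\[
\sigma \sqrt{\N(\lam_n)/(n\lam_n)} \,\leq\, \sigma \sqrt{C_{\nu^{*}}\F(\lam_n)/(n\lam_n)} \,\leq\, C_{\nu^{*}} R \lam_n^{r} \,=\, C_{\nu^{*}}\, a^{\h}_{n,\theta},
\]
matching the bias $C R \lam_n^r$. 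The admissibility condition $n\lam_n \geq c\,\N(\lam_n)\log(1/\eta)$ from (i) is, via the same bound on $\N$, equivalent to the stated hypothesis $\log(8/\eta)\leq C_{r,\nu^{*}}\lam_n^{-r}$, and the residual term $M/(n\lam_n)$ is lower-order, absorbed by $n\geq n_0$. The $L^2(\nu)$-norm version follows by pushing an extra $\bar B_\nu^{1/2}$ factor through the decomposition: the spectral bound $\|\bar B_\nu^{1/2}g_\lam(\bar B_x)(\bar B_x+\lam)^{1/2}\|\leq C$ removes the $1/\sqrt{\lam}$ loss in the variance, while the bias requires the stronger qualification $q\geq r+\frac{1}{2}$ and gains an extra $\sqrt{\lam_n}$, producing the advertised rate $R\lam_n^{r+1/2}=a^{L^2(\nu)}_{n,\theta}$.

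The main obstacle is precisely the dyadic control $\N(\lam)\leq C_{\nu^{*}}\F(\lam)$ and its interaction with the calibration through $\G^{-1}$: unlike the polynomial case, neither $\F$ nor $\G$ is known in closed form, so the left-continuity of $\F$ and the precise behaviour of the generalized inverse \eqref{eq:definverse} (collected in Lemma \ref{lem:prop_FG}) are the non-negotiable tools for converting the defining identity $\G(\lam_n)\sim\sigma^2/(R^2 n)$ into the final inequality. Once this is in place, the Bernstein concentration steps, the perturbation of $(B_x+\lam)^{\pm 1/2}$ against $(B_\nu+\lam)^{\pm 1/2}$, and the spectral manipulations of $g_\lam$ are direct adaptations of their counterparts in the regular-decay literature.
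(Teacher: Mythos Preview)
Your proposal is correct and follows essentially the same route as the paper: the key new ingredients you single out — the dyadic bound $\N(\lam)\leq C_{\nu^{*}}\F(\lam)$ under \eigup\ and the calibration of $\lam_n$ through the generalized inverse $\G^{-1}$ via Lemma~\ref{lem:prop_FG} — are exactly the ones the paper isolates. The only presentational difference is that the paper packages all of your concentration and perturbation steps (your items (i)--(iii), the bias--variance split, and the spectral bounds on $g_\lam$) into a single cited non-asymptotic inequality (Proposition~\ref{maintheo1} from \cite{BlaMuc16}), and then proceeds directly to the effective-dimension substitution and balancing of terms; your version simply unrolls that black box.
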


{\bf Remark:} In the above Theorem, $n_0$ can possibly depend on all parameters\,,
but the constant $C_{r, \nu^*}$ in front of the upper bound does not depend on $R,\sigma$, nor $M$\,. In this sense,
this result tracks precisely the effect of these important parameters on the scaling of the rate,
but remains asymptotic in nature: it cannot be applied if, say, $R,\sigma$ of $M$ also depend on $n$
(because the requirement $n\geq n_0$ might then lead to an impossiblity.) 
%If some parameters are
%allowed to change with $n$\,, one should go back to the nonasymptotic statement of Theorem \ref{maintheo1} for an analysis of the rates.

%Finally, we have as a direct consequence:

%\begin{cor}[Minimax optimal rate]
%\label{maincor}
%Let $r>0$ be fixed and assume \eigup \; as well as \eiglow . 
%Then the sequence of estimators $f_{\z}^{\lam_{n,(M,\sigma,R)}}$ as defined in \todo{???} is minimax optimal in $L^p$ for all $p>0$ w.r.t. both $\h$- norm and $L^2(\nu)$- norm,  
%under the assumptions and parameter sequence \eqref{paramrule} of Theorem~\ref{maintheo3}\,,over the class $\M_{M,\sigma, R}$.
%\end{cor}
 
%%%%%%%%%%%%%%%%%%%%%%%%%%%%%%%%%%%%%%%%%%%%%%%%%%%%%%%%%%%
%%%%%%%%%%%%%%%%%%%%%%%%%%%%%%%%%%%%%%%%%%%%%%%%%%%%%%%%%%%

\section{Discussion}

{\bf Range of applications}. The assumptions we made on the spectrum decay, namely \eigup and \eiglow,
are much weaker than the usual assumptions of polynomial decay on the eigenvalues. Therefore, our results establish that
in this much broader situation, usual kernel regularization methods can achieve minimax rates over the regularity
classes \SC. In particular, these conditions accomodate for changing behavior of the spectrum at different scales,
as well as other situations delineated in the introduction. Still, our conditions do not encompass totally
arbitary sequences: \eiglow in particular implies that the eigenvalues cannot decrease with a polynomial rate with 
exponent larger than $\nu_*$\,. While the latter constant can be chosen arbitrary large 
and only results in a change of constant factor in the rates,
it excludes for example exponentially decreasing eigenvalues. It is not clear whether such sharp results
about classes \SC can be established in this case. However, we observe that in such cases (typically observed when using the 
Gaussian kernel), in practice the kernel parameters (e.g. the bandwith) are also tuned in addition to $\lambda$\,,
which might reflect the fact that for badly tuned bandwidth of the kernel, tuning of the regularization parameter
$\lambda$ alone might not give satisfactory (minimax) results. On the other hand, the results we obtained might provide an
additional motivation for using ``rougher'' kernels than Gaussian, leading to a softer decay of eigenvalues,
in which case minimax adaptivity is at hand over a large regularity class. The latter is true however only if the qualification of the method
is large, which is not the case for the usual kernel ridge regression: hence, rougher kernels should be used with
methods having a large qualification (for instance $L^2$ boosting).

{\bf Adaptivity}. Our results establish the existence of a suitable regularization parameter $\lambda$ such that the
associated estimator attains the minimax rate if the regularity class parameter are known in advance. The latter is of course
not realistic, but in the case of $L^2$ (prediction) error, the principle of using a grid for $\lambda$ and then using
a hold-out sample to select a value of $\lambda$ from the data is known to be able to select a value close to the optimal
choice  in a broad domain of situations (see, for instance, \cite{CapYao10}), so that we can turn our results to 
data-dependent minimax adaptivity even in the absence of a priori knowledge of the regularity parameters.

%%%%%%%%%%%%%%%%%%%%%%%%%%%%%%%%%%%%%%%%%%%%%%%%%%%%%%%%%%%
%%%%%%%%%%%%%%%%%%%%%%%%%%%%%%%%%%%%%%%%%%%%%%%%%%%%%%%%%%%

\section{Proofs}

The proof of Theorem \ref{maintheo3} and Theorem \ref{minimaxlowerrate} will be given not only in both $\h$- norm and $L^2(\nu)$-norm, but also for all intermediate norms, namely 
for $|| f||_s := ||B^s f||_{\h}$, where $s \in [0, 1/2]$. Note that $s=0$ corresponds to $\h$- norm, while $s=1/2$ corresponds to $L^2(\nu)$-norm. 

To simplify notation we will adopt the following conventions: The dependence of multiplicative constants $C$ on various parameters will
   (generally) be omitted, except for $\sigma, M, R,\eta$ and $n$ which we want to track precisely\,.
The expression ``for $n$ big enough'' means that the statement holds for
   $n\geq n_0$\,, with $n_0$ potentially depending on all model parameters
   (including $\sigma, M$ and $R$), but not on $\eta$\,.

%%%%%%%%%%%%%%%%%%%%%%%%%%%%%%%%%%%%%%%%%%%%%%%%%%%
%%%%%%%%%%%%%%%%%%%%%%%%%%%%%%%%%%%%%%%%%%%%%%%%%%%

\subsection{Preliminaries}

We start by collecting some useful properties for the functions
$\F$ and $\G$ in the following lemma. 
%, whose proof is given in the supplementary:

\begin{lem}
\label{lem:prop_FG}
\begin{enumerate}
    \item Let $c\leq 1$ be fixed. Then for any $t$\,,
      \[
      \G(ct) \leq c \G(t)\,.
      \]
  \item Assume \eigup holds.  Let $C\geq 1$ be fixed. Then for any $t$ small enough,
    \[
    \F(t) \leq 4C^{\frac{1}{\nu^*}} \F(Ct) \qquad \text{ and }
    \qquad
      \G(Ct) \leq 
    4C^{2r+1+\frac{1}{\nu^*}}\G(t)\,.
    \]
  \item Assume \eigup holds. For any $u>0$ it holds $\G\big( \G^{-1}(u)) \leq u$ and for $u$ small enough,
    \[
     \G\big( \G^{-1}(u)) \geq  \frac{u}{4}\,.
    \]
  \end{enumerate}
\end{lem}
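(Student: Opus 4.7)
My plan is to treat the three items separately, with item (2) containing the substantive work and items (1) and (3) following essentially from (2) and from the definitions of $\F, \G$.

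Item (1) expands $\G(ct) = (ct)^{2r+1}/\F(ct)$ and uses $\F(ct) \geq \F(t)$ by monotonicity of $\F$ (since $ct \leq t$), yielding $\G(ct) \leq c^{2r+1}\G(t) \leq c\G(t)$ because $2r+1 \geq 1$ and $c \leq 1$. For item (2), the $\G$-inequality reduces algebraically to the $\F$-inequality since $\G(Ct)/\G(t) = C^{2r+1}\F(t)/\F(Ct)$, so the real content is $\F(t) \leq 4C^{1/\nu^*}\F(Ct)$. I set $M := \F(t)$ and $N := \F(Ct)$, so $\mu_M \geq t$ and $\mu_{N+1} < Ct$; the case $C=1$ is trivial so assume $C>1$. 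For $t$ small enough that $N$ lies past the index from which \eigup holds, iterating \eigup $k$ times starting at index $N+1$ gives $\mu_{2^k(N+1)} \leq 2^{-k\nu^*}\mu_{N+1}$. If $M \geq 2^k(N+1)$, monotonicity of the eigenvalues forces $t \leq \mu_M \leq \mu_{2^k(N+1)} < 2^{-k\nu^*} C t$, i.e. $2^{k\nu^*} < C$. Hence the maximal admissible $k$ satisfies $2^k \leq C^{1/\nu^*}$, and by construction $M < 2^{k+1}(N+1) \leq 2C^{1/\nu^*}(N+1) \leq 4C^{1/\nu^*}N$ using $N \geq 1$.

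For item (3), the inequality $\G(\G^{-1}(u)) \leq u$ is immediate from the left-continuity and monotonicity of $\G$: the sublevel set $\{t > 0 : \G(t) \leq u\}$ has the form $(0, t^*]$, so $t^* := \G^{-1}(u)$ realizes the maximum. For the lower bound, I take any $C>1$: maximality of $t^*$ gives $\G(Ct^*) > u$, while item (2) (valid for $u$ small enough so that $t^*$ is small) gives $\G(Ct^*) \leq 4C^{2r+1+1/\nu^*}\G(t^*)$. Combining and letting $C \searrow 1$ yields $\G(t^*) \geq u/4$.

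The hard part will be item (2), specifically making the dyadic iteration robust against the fact that \eigup is assumed only for $j$ large enough, and cleanly tracking the factor-of-two losses that produce the constant $4$: one factor of two comes from rounding $2^{k+1}$ past $C^{1/\nu^*}$, the other from replacing $N+1$ by $N$. The constant $4$ in item (3) is then dictated by the $C \searrow 1$ limit in item (2).
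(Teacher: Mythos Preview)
Your argument is correct. Items (1) and (2) follow the paper's proof essentially verbatim: for (2) the paper also sets $k_t := \F(Ct)$, uses $\mu_{k_t+1} < Ct$, iterates \eigup\ dyadically starting from $k_t+1$, and chooses the number of iterations $i := 1 + \lfloor (\log_2 C)/\nu^* \rfloor$ so that $2^{-i\nu^*}C \leq 1$ and $2^i \leq 2C^{1/\nu^*}$, arriving at the same constant $4$ via $2^i(k_t+1)-1 \leq 4C^{1/\nu^*}k_t$. Your contrapositive phrasing (``maximal $k$ with $M \geq 2^k(N+1)$'') is just a reindexing of that same computation.

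For item (3) you take a genuinely different route. The paper does \emph{not} reuse item (2); instead it introduces the right limit $\F(t^+) := \#\{j : \mu_j > t\}$, shows directly via \eigup\ that $\F(t) \leq 4\F(t^+)$ for $t$ small (the jump at $t = \mu_k$ is controlled because $\mu_k < \mu_{\lfloor k/2 \rfloor}$), and then uses $\G(t^+) \geq u$ at $t = \G^{-1}(u)$ to conclude $\G(t) \geq \G(t^+)/4 \geq u/4$. Your approach---take $C>1$, use maximality of $t^*$ to get $\G(Ct^*) > u$, bound $\G(Ct^*)$ via item (2), and let $C \searrow 1$---is slicker and avoids introducing $\F(t^+)$ altogether. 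It does require checking that the ``$t$ small enough'' threshold in item (2) is uniform over $C$ near $1$; since the paper's proof of (2) only needs $t \leq C^{-1}t_0$ with $t_0$ independent of $C$, this is fine for $C \in (1,2]$ once $t^* \leq t_0/2$. Both approaches yield the same constant $4$.
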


\begin{proof}[Proof of Lemma \ref{lem:prop_FG}]
For point 1 of  the Lemma, let $c\leq 1$ be fixed; just write by definition of $\G$ and the fact
  that $\F$ is nonincreasing:
  \[
  \G(ct) = \frac{c^{2r+1}t^{2r+1}}{\F(ct)} \leq c^{2r+1} \G(t) \leq c \G(t)\,.
\]
  
  For point 2, let $j_0 \geq 1$ such that \eigup \; holds for any $j\geq j_0$
  and let $t_0$ be small enough such that $\F(t_0) \geq j_0$\,.

  Let $C\geq 1$ be fixed and $t \leq C^{-1}t_0$, so that
  $k_t:=\F(Ct) \geq j_0$\,. By definition $\eigv_{k_t+1} < Ct$\,.
  Furthermore for any $i\geq 1$ we have $\eigv_{2^i(k_t+1)} \leq 2^{-i\nu^*} \eigv_{(k_t+1)}$  
  by repeated application of \eigup. Choosing $i:= 1+ \lfloor \frac{\log_2 C}{\nu^*} \rfloor$,
  we have $2^{-i\nu^*}C \leq 1$ and $2^i\leq 2C^{\frac{1}{\nu^*}}$\,. Combining the first inequality
  with what precedes we deduce $\eigv_{2^i(k_t+1)} < t$ and thus
  $\F(t) \leq 2^i(k_t+1) -1 \leq 4C^{\frac{1}{\nu^*}} \F(Ct)$\,.
  We deduce
  \[
  \G(Ct) = \frac{C^{2r+1}t^{2r+1}}{\F(Ct)} \leq 
    \frac{4C^{2r+1+\frac{1}{\nu^*}}t^{2r+1}}{\F(t)} = 4C^{2r+1+\frac{1}{\nu^*}}\G(t)\,.
    \]
  %%   Similarly, we also have by definition $\eigv_{k_t} \geq C t$, and
  %%   $\eigv_{2^ik_t} \geq 2^{-i\nu_*} \eigv_{k_t} \geq 2^{-i\nu_*} Ct$ by repeated application of \eiglow.
  %%   Choosing $i:= \lceil \frac{\log_2 C}{\nu_*} \rceil -1 $, we have $2^{-i\nu_*}C\geq 1$
  %%   and $2^i \geq \frac{1}{2} C^{\frac{1}{\nu_*}}$.
  %%   We deduce $\eigv_{2^ik_t} \geq t$ and therefore $\F(t) \geq 2^i k_t \geq \frac{1}{2} C^{\frac{1}{\nu_*}} \F(Ct)$.
  %%   It follows
  %% \[
  %% \G(Ct) = \frac{C^{2r+1}t^{2r+1}}{\F(Ct)} \geq 
  %%   \frac{4C^{2r+1+\frac{1}{\nu^*}}t^{2r+1}}{\F(t)} = 4C^{2r+1+\frac{1}{\nu^*}}\G(t)\,.
  %%   \]

    We turn to point 3. Since $\G$ is left-continuous, the
    supremum in the definition (2.2) of its inverse $\G^{-1}$  is indeed
    a maximum (also the set over which the max is taken is nonempty since $\G(0^+)=0$ and $u>0$),
    and therefore must satisfy $\G(\G^{-1}(u))\leq u$.

    Consider now $\F(t^+):=\#\set{j \in \NN: \eigv_i > t}$,
    let $t_0'$ be small enough such that $\F(t_0') \geq 2 j_0$\,,
    and assume $t < \min(t'_0,\eigv_1)$. The second
    component of the latter minimum ensures $\F(t^+)\geq 1$.
    If $t \not\in \set{\eigv_i, i\geq 1}$, then $\F$ is continous in $t$
    and $\F(t) = \F(t^+)$. Otherwise, $t = \eigv_k$ with $k=\F(t) \geq 2j_0$\,,
    so that $\frac{\eigv_{k}}{\eigv_{\lfloor{\frac{k}{2}\rfloor}}} \leq 2^{-\nu^*}<1$\,,
    that is to say $t < \eigv_{\lfloor{\frac{k}{2}\rfloor}}$\,,
    implying $F(t^+) \geq \lfloor{\frac{k}{2}\rfloor} \geq \frac{1}{2} \F(t) -1$
    and finally $\F(t) \leq 4 \F(t^+)$\,.

    Consider now $u$ small enough such that $t=\G^{-1}(u) < \min(t'_0,\eigv_1)$ as above.
    Then $\G(t^+) \geq u$ and
      \[
      \G(\G^{-1}(u)) = \G(t)  = \frac{t^{2r+1}}{\F(t)}
      \geq \frac{1}{4} \frac{t^{2r+1}}{\F(t^+)}
      = \frac{1}{4} {\G(t^+)} \geq \frac{u}{4}\,.
      \]

\end{proof}

\begin{lem}[Effective dimensionality]
\label{lem:effect_dim}
Assume  \eigup \, holds. The effective dimension (defined in \eqref{def:eff_dim}) satisfies for any $\lam$ sufficiently small and some $C_{\nu^{*}}<\infty$
\begin{equation}
\N (\lam)  \leq C_{\nu^{*}} \F(\lam) \; .
\end{equation}
\end{lem}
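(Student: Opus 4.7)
The natural strategy is to split the defining series of $\N(\lam)$ at the critical index $k := \F(\lam)$ (where the spectrum crosses $\lam$) and reduce the resulting tail to a quantity of order $\lam \F(\lam)$ by a dyadic decomposition in the eigenvalue scale, using only the scaling property of $\F$ established in point~2 of Lemma~\ref{lem:prop_FG}.

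The first step is to write
\[
\N(\lam) \;=\; \sum_{i\leq k}\frac{\mu_i}{\mu_i + \lam} \;+\; \sum_{i>k}\frac{\mu_i}{\mu_i + \lam}\,.
\]
Each summand of the head is bounded by $1$, giving a contribution at most $k = \F(\lam)$. For $i>k$ we have $\mu_i<\lam$ by definition, so $\mu_i/(\mu_i+\lam)\leq \mu_i/\lam$, and the problem reduces to proving
\[
\sum_{i>k}\mu_i \;\leq\; C_{\nu^*}\,\lam\,\F(\lam)\,.
\]

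To bound this tail I would group the indices $i>k$ according to the dyadic shell $\mu_i\in[2^{-(m+1)}\lam,\, 2^{-m}\lam)$ for $m\geq 0$. Each shell contains at most $\F(2^{-(m+1)}\lam)$ indices, each of size at most $2^{-m}\lam$, so
\[
\sum_{i>k}\mu_i \;\leq\; \lam\sum_{m\geq 0} 2^{-m}\,\F\bigl(2^{-(m+1)}\lam\bigr)\,.
\]
Point~2 of Lemma~\ref{lem:prop_FG}, applied with $C=2^{m+1}$ and $t=2^{-(m+1)}\lam$ (legitimate uniformly in $m$ provided $\lam$ itself is small enough, since $t\leq\lam$), yields $\F(2^{-(m+1)}\lam)\leq 4\cdot 2^{(m+1)/\nu^*}\F(\lam)$. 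Plugging this in produces a geometric series of common ratio $2^{-(1-1/\nu^*)}$, summable by virtue of $\nu^*>1$, whose sum is an explicit constant $C_{\nu^*}$. Combining with the head estimate gives $\N(\lam)\leq(1+C_{\nu^*})\F(\lam)$.

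I do not anticipate any real obstacle: once the dyadic chain is in place, the argument is essentially bookkeeping, and the single nontrivial input is the scaling inequality for $\F$ that Lemma~\ref{lem:prop_FG} already packages. The only point worth monitoring is the convergence of the dyadic series, which depends critically on $\nu^*$ being strictly greater than~$1$; this is precisely the quantitative content that the assumption \eigup{} transmits to the effective dimension.
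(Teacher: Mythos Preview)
Your argument is correct and follows the paper's overall structure: split $\N(\lam)$ at $k=\F(\lam)$, bound the head trivially by $k$, and control the tail $\sum_{i>k}\mu_i$ by a dyadic decomposition that produces a geometric series, convergent precisely when $\nu^*>1$. The only genuine difference is \emph{where} the dyadic grouping is performed. The paper blocks the \emph{indices}, writing $j\in[(k+1)2^l,(k+1)2^{l+1})$ and applying \eigup{} directly to obtain $\mu_j\leq 2^{-l\nu^*}\mu_{k+1}<2^{-l\nu^*}\lam$, yielding ratio $2^{1-\nu^*}$. You instead block the \emph{eigenvalue range}, grouping $\mu_i\in[2^{-(m+1)}\lam,2^{-m}\lam)$ and invoking the packaged scaling of $\F$ from Lemma~\ref{lem:prop_FG}, yielding ratio $2^{-(1-1/\nu^*)}$. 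Both ratios are strictly less than $1$ under the same hypothesis $\nu^*>1$. Your route is slightly more modular (it reuses the lemma); the paper's is marginally more self-contained (it needs only \eigup{} itself).

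One small caveat on your uniformity claim: applying Lemma~\ref{lem:prop_FG}(2) with $C=2^{m+1}$ for all $m\geq 0$ is indeed legitimate, but the reason is that in the lemma's proof the smallness condition reads $Ct\leq t_0$, and here $Ct=\lam$ is fixed independently of $m$; the justification ``since $t\leq\lam$'' alone would not suffice, because the threshold for $t$ in the lemma's statement depends on $C$.
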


\begin{proof}[Proof of Lemma \ref{lem:effect_dim}]
  Let $j_0 \geq 1$ such that \eigup  \; holds for any $j\geq j_0$ and let $\lam_0$ be small enough such that
  $\F(\lam_0) \geq j_0$\,. For $\lam \leq \lam_0$, denote $j_\lam := \F(\lam)$. Then,
  using $\eigv_j < \eigv_j + \lam$ and $\lam < \eigv_j + \lam $, we obtain
\begin{align*}
\N (\lam)  
%\sum_{j=1}^{\infty}\frac{\eigv_j}{\eigv_j + \lam} 
= \sum_{j=1}^{j_\lam} \frac{\eigv_j}{\eigv_j + \lam} + 
\sum_{j> j_\lam} \frac{\eigv_j}{\eigv_j + \lam} 
\leq j_\lam + \frac{1}{\lam} \sum_{j > j_\lam}\eigv_j  \; .
\end{align*}
Focussing on the tail sum we see that
\begin{align*}
%\label{calc}
\sum_{j \geq j_\lam}\eigv_j  = \sum_{l=0}^{\infty} \sum_{j=(j_\lam+1)2^l}^{(j_\lam+1)2^{l+1}-1} \eigv_j   
%\sum_{l=0}^{\infty} \sum_{j=j_02^l}^{j_02^{l+1}-1} \eigv_{j_0}   2^{-l\nu{*}} \\
\leq  (j_\lam+1) \eigv_{j_\lam+1} \sum_{l=0}^{\infty} 2^l2^{-l\nu{*}}
&=  (j_\lam+1) \eigv_{j_\lam+1} (1-2^{1-\nu{*}})^{-1}\\  &\leq \lam(\F(\lam) +1) (1-2^{1-\nu{*}})^{-1}
\,, \nonumber
\end{align*}
where the first inequality comes from the fact that the sequence $(\eigv_j)_j$ is decreasing
and by repeated application of \eigup; and the second inequality from the definition
of $j_\lam$.
%Since B has discrete spectrum, given $j_0$ we find some $\lam_0$ such that $j_0=2\F(\lam_0)$.
Collecting all ingredients
%and recalling that $\eigv_{2\F(\lam_0) } < \lam_0$, by definition of $\F(\lam_0)$,
we obtain for any $\lam \leq \lam_0$:
\begin{align*}
  \N (\lam) & \leq \F(\lam)(1+2(1-2^{1-\nu{*}})^{-1})\,.
  %+ \frac{2}{\lam_0} \F(\lam_0)  \eigv_{2\F(\lam_0) } (1-2^{1-\nu{*}})^{-1} \leq C_{\nu^{*}}\F(\lam_0)   \leq  C_{\nu^{*}}\F(\lam)\; .  
\end{align*}
\end{proof}

%%%%%%%%%%%%%%%%%%%%%%%%%%%%%%%%%%%%%%%%%%%%%%%%%%%
%%%%%%%%%%%%%%%%%%%%%%%%%%%%%%%%%%%%%%%%%%%%%%%%%%%

\subsection{Proof of upper rate}

The proof of Theorem \ref{maintheo3} relies on the following non-asymptotic result, given in \cite{BlaMuc16}:

\begin{prop}
\label{maintheo1}
Let $s\in[0,\frac{1}{2}]$, assume \SC \, and  \bernstein \,. 
%, $r>0$, $R>0$, $M>0$\,.
%Suppose $\fo \in \Omega (\nu, r, R)$ (defined in \ref{eq:source})\,.
Let $f_{\z}^{\lam}$ be defined as in \eqref{estimator} using
a regularization function of qualification
$q \geq r+s$\,. 
%and put $\bar \gamma := \max(\gamma_0,\gamma_q)$\,.
%Set $C_{\eta}=8\log{\left(8 / \eta\right)}$. 
Then, for any $\eta \in (0,1)$, $\lam \in (0, 1]$ and $n \in \NN$ satisfying
  %$\lam \geq n^{-1}$ and
\begin{equation}
\label{assumpt1}
  n \geq 64 \lam^{-1} \max({\cal N}(\lam),1) \log^2{\left(8 / \eta\right)}  \; ,
\end{equation}  
we have  with probability at least 
$1-\eta$:
\begin{equation}
\label{maintheo1eq}
\norm{B_{\nu}^s (\fo - f_{\z}^{\lam})}_{\h}  \leq C_{r,s, \kappa} \; \log(8\eta^{-1})  \lam^s \left(  R\paren{\lam^r +   \frac{1}{\sqrt n}}  
+  \left( \frac{M}{n\lam} + \sqrt{\frac{\sigma^2 {{\cal N}(\lam)}}{{n \lam}}}\right) \right)\,.
% \norm{B^s (\fo - f_{\z}^{\lam})}_{\h}  \leq C_{r,s} \; \log(8\eta^{-1}) \kappa^{2s} \lam^s \left( \bar \gamma \kappa^{2r} R\paren{\lam^r +   \frac{1}{\sqrt n}}  
%+ \kappa^{-1} \left( \frac{M}{n\lam} + \sqrt{\frac{\sigma^2 {{\cal N}(\lam)}}{{n \lam}}}\right) \right)\,.
%\sqrt{\frac{{{\cal N}(\lam)}}{{\lam n}}}\right)\,.
%&  \leq C_{r,R} \; \log(8\eta^{-1}) \lam^s \left( \lam^r + \sqrt{\frac{{{\cal N}(\lam)}}{{\lam n}}}\right) \; ,
\end{equation}
\end{prop}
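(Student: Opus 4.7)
The plan is to follow a bias/variance decomposition in the spirit of Caponnetto--De Vito and related works. Writing $\bar S_\x^* \y = \bar B_\x \fo + \bar S_\x^* \bm{\eps}$ with $\bm{\eps} := (Y_i - \fo(X_i))_{i=1}^n \in \R^n$, and using $r_\lam(t) = 1 - t g_\lam(t)$, the error splits as
\[
\fo - f_{\z}^\lam \;=\; r_\lam(\bar B_\x)\, \fo \;-\; g_\lam(\bar B_\x)\, \bar S_\x^* \bm{\eps}\,.
\]
To bound $\norm{B_\nu^s(\cdot)}_\h$ of each piece, I would use the ``$T_\lam$--$\bar T_\x$ sandwich'' technique: introducing $T_\lam := B_\nu + \lam$ and $\bar T_\x := \bar B_\x + \lam$, and inserting $T_\lam^{\pm 1/2}$, $\bar T_\x^{\pm 1/2}$ between adjacent operator factors, one reduces everything to three elementary building blocks: deterministic spectral norms involving $g_\lam$, $r_\lam$ and fractional powers of $\bar B_\x$; the multiplicative quantity $\norm{T_\lam^{1/2}\bar T_\x^{-1/2}}$; and the empirical noise term $\norm{T_\lam^{-1/2} \bar S_\x^* \bm{\eps}}_\h$.

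The probabilistic core of the proof is to control those last two objects. Applying a Bernstein-type concentration inequality for self-adjoint Hilbert--Schmidt-operator-valued random variables to the summands $\kappa^{-2}\Phi(X_i)\otimes\Phi(X_i) - \bar B_\nu$, in which $\N(\lam)$ arises naturally as the variance proxy, one shows that the sample-size condition \eqref{assumpt1} is exactly what makes $\norm{T_\lam^{1/2}\bar T_\x^{-1/2}}^2 \leq 2$ hold with probability at least $1 - \eta/2$. A parallel Bernstein inequality for $\h$-valued variables applied to the noise summands $\eps_i \Phi(X_i)$, whose moments are governed by \bernstein, yields with probability at least $1-\eta/2$
\[
\norm{T_\lam^{-1/2} \bar S_\x^* \bm{\eps}}_\h \;\leq\; C\log(8/\eta)\paren{\frac{M}{n\sqrt\lam} \,+\, \sqrt{\frac{\sigma^2 \N(\lam)}{n}}}.
\]

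For the bias term $B_\nu^s r_\lam(\bar B_\x)\fo$, I would use \SC as $\fo = B_\nu^r g$ with $\|g\|_\h \leq R$ and split
\[
r_\lam(\bar B_\x)\, B_\nu^r g \;=\; r_\lam(\bar B_\x)\, \bar B_\x^r g \;+\; r_\lam(\bar B_\x)(B_\nu^r - \bar B_\x^r)\, g.
\]
The first contribution, after the sandwich insertions and the qualification bound $\sup_t (t+\lam)^s t^r |r_\lam(t)| \leq C\lam^{r+s}$ (which is precisely where the assumption $q \geq r+s$ enters), delivers the $R\lam^{r+s}$ piece. The second contribution is handled via an operator-monotonicity / integral-representation estimate of the form $\norm{B_\nu^r - \bar B_\x^r} \leq C\norm{\bar B_\nu - \bar B_\x}^{\min(1,r)}$, with $\norm{\bar B_\nu - \bar B_\x}$ bounded by $O(n^{-1/2})$ via the same operator Bernstein argument as above; this yields the $R\lam^s/\sqrt n$ piece. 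On the noise side, combining the elementary spectral bound $\sup_t(t+\lam)|g_\lam(t)| \leq D+E$ with the multiplicative-quantity control and the Bernstein noise bound then produces the $\lam^s(M/(n\lam) + \sqrt{\sigma^2 \N(\lam)/(n\lam)})$ variance contribution.

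The main obstacle I expect is the \emph{simultaneous} and \emph{sharp} bookkeeping of all the fractional-power operator factors across both bias and variance, so that the final bound is linear in each of $R,\sigma,M$ separately and so that no spurious $\lam^{-s}$ or $\N(\lam)^{1/2}$ leaks from the noise term into the bias term. The sample-size condition \eqref{assumpt1} is exactly calibrated to ensure $\norm{T_\lam^{1/2}\bar T_\x^{-1/2}}^2\leq 2$ at the target probability, and the qualification requirement $q\geq r+s$ is exactly what makes the bias exponent come out as $r+s$ uniformly for the whole family of intermediate norms $s\in[0,1/2]$; both are tight in this technique.
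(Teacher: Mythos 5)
First, note that the paper itself gives no proof of Proposition~\ref{maintheo1}: it is stated verbatim as a result ``given in [BlaMuc16]'' and then applied in the proof of Theorem~\ref{maintheo3}, so strictly speaking there is no in-paper argument to compare against. Your outline does capture the standard machinery behind such bounds (bias--variance split under spectral regularization, insertion of $T_\lam^{\pm1/2}$ and $\bar T_\x^{\pm1/2}$ factors, Cordes-type reductions, operator- and vector-valued Bernstein inequalities with $\N(\lam)$ as variance proxy), and the variance-side bookkeeping you sketch is correct: it does produce the contribution $\lam^s\big(M/(n\lam) + \sqrt{\sigma^2\N(\lam)/(n\lam)}\big)$, with condition \eqref{assumpt1} being exactly what keeps $\norm{T_\lam^{1/2}\bar T_\x^{-1/2}}$ bounded.

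There is, however, a concrete gap on the bias side. Your split
$r_\lam(\bar B_\x)B_\nu^r g = r_\lam(\bar B_\x)\bar B_\x^r g + r_\lam(\bar B_\x)(B_\nu^r-\bar B_\x^r)g$,
combined with the operator-power H\"older estimate $\norm{B_\nu^r-\bar B_\x^r}\le C\,\norm{B_\nu-\bar B_\x}^{\min(1,r)}$, yields for $r<1$ a remainder of order $R\,\lam^s\, n^{-r/2}$, which is \emph{strictly larger} than the claimed $R\,\lam^s\, n^{-1/2}$ --- so as written your argument does not establish the proposition over the full range $r>0$. The remedy is to refrain from splitting when $r\le 1$: applying the Cordes inequality $\norm{A^tB^t}\le\norm{AB}^t$ ($t\in[0,1]$, $A,B\ge 0$) to both $T_\lam^s\bar T_\x^{-s}$ (using $s\le 1/2$) and $\bar T_\x^{-r}T_\lam^r$ (using $r\le 1$) gives $\norm{B_\nu^s r_\lam(\bar B_\x)B_\nu^r g}\le C\,R\,\lam^{r+s}$ directly, with no $n$-dependent remainder at all; the $R/\sqrt n$ term is only needed, and is then obtained via a Lipschitz (not H\"older) bound on $B_\nu^r-\bar B_\x^r$, in the regime $r>1$ where the direct Cordes reduction no longer applies. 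So the proof must case-split on $r$ around $1$ (or iterate the Cordes step), precisely in the place you flagged as the main bookkeeping obstacle.
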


\begin{proof}[Proof of Theorem \ref{maintheo3}:]
Let all assumptions of Theorem \ref{maintheo3} be satisfied. 
%We claim that for any $n$ sufficiently large (possibly depending on all occuring parameters) with probability at least $1-\eta$\,:
%\begin{equation}
%\label{eq:reduxed}
% \norm{B^s(\fo - f_{\z}^{\lam_n})}_{\h}  \leq   C_{\cdot} \; \log(8\eta^{-1})
%R \G^{-1}\left(\frac{\sigma^2}{R^2n} \right)^{r+s} \,,   
%\end{equation}
%when setting 
%$ \lam_n = \min\paren{ \G^{-1}\left(\frac{\sigma^2}{R^2n} \right) ,1} \; $, 
%which tends to zero as $n \to \infty$.   
%provided $\log \eta^{-1} \leq C_{b,\beta,\kappa,\sigma,R} n^{\frac{br}{2br+b+1}}$\,.
%We emphasize that the constant in front of the upper bound possibly depends on all parameters, but does not depend on $R, \sigma$, nor $M$. 
For proving \eqref{eq:reduxed_new} we want to apply Proposition \ref{maintheo1}\;. Provided n is big enough, we have $\F(\lam_n)\geq 1$ and by Lemma \ref{lem:effect_dim} it holds $\N(\lam_n) \leq C_{\cdot} \lam_n^{2r+1}/\G(\lam_n)$, following from the definition of $\G$. By the definition of $\lam_n$ and by Lemma \ref{lem:prop_FG}, $(iii)$, for n sufficiently large,    $\G(\lam_n)\geq C_{\sigma, R}\frac{1}{n}$, so assumption \eqref{assumpt1} is satisfied if 
$\log{(8/\eta)}\leq C_{\sigma, R}\lam_n^{-r}$. Hence, with probability at least $1-\eta$
\begin{equation}
\label{maintheo1eqredux}
 \norm{B^s (\fo - f_{\z}^{\lam_n})}_{\h}  \leq C_{\cdot} \; \log(8\eta^{-1}) \lam_n^s \left( R\paren{\lam_n^r +   \frac{1}{\sqrt n}}  
+ \left( \frac{M}{n\lam_n} +\sigma \sqrt{\frac{\lam_n^{2r}}{n\G(\lam_n)}}
 \right)\right)\,.
\end{equation}
Observe that the choice \eqref{paramrule} 
implies that $n^{-\frac{1}{2}} = o(\lam_n^r)$, since $\sigma^2/R^2n=\G(\lam_n) \leq \lam_n^{2r +1}$. Therefore, up to requiring $n$ large enough
and multiplying the front factor by 2 ,
we can disregard the term  $1/\sqrt{n}$ in the second factor of the above bound. 
Similarly, one can readily check that
\[
\frac{M}{n\lam_n} = o\paren{ \sqrt{\frac{\lam_n^{2r}}{n\G(\lam_n)}} }\,,
\]
so that we can also disregard the term $(n\lam_n)^{-1}$ for $n$ large enough 
%(again, up multiplying the front factor by 2) 
and concentrate on the two remaining main terms of the upper bound in \eqref{maintheo1eqredux}, 
which are $R \lam_n^r$ and $\sigma\lam_n^{r}\G(\lam_n)^{-\frac{1}{2}} n^{-\frac{1}{2}}$ \,.
The proposed choice of $\lam_n$ balances precisely these two terms and 
easy computations lead to \eqref{eq:reduxed_new}.
\end{proof}

%%%%%%%%%%%%%%%%%%%%%%%%%%%%%%%%%%%%%%%%%%%%%%%%%%%%%%%%%%%%%%%%%%%%%%%%%%%%%%%%%%%%%%%%%%%%%%%%%%

\subsection{Proof of minimax lower rate}

Let $r>0$, $R>0$ and $s \in [0,1/2]$ be fixed. Assume \eiglow \;.  
In order to obtain minimax lower bounds, we apply a general reduction scheme as presented in Chapter 2 of \cite{tsybakov}. The main idea is to find $N_{\eps}$ functions $f_1, \ldots, f_{N_{\eps}}$ satisfying \SC \,, depending on $\eps$ sufficiently small, with $N_{\eps} \to \infty$ as $\eps \to 0$, such that any two of these functions are $\eps$-separated with respect to $||\cdot||_{s}$- norm, but such that the associated distributions $\PP_{f_j} =: \PP_j \in \M$ (see definition \eqref{stochkern} below) have small Kullback-Leibler divergence $\K$ to each other and are therefore statistically close. It is then clear that 
 \begin{align}
\label{reduction}
%\inf_{f_{\bullet}}\sup_{\PP \in {\M}} \EE_{\z \sim\PP^{\otimes n}} 
%[  \norm{ B^s( f_{\PP}^* - f_{\z}) } _{\h}^p  ]^{\frac{1}{p}}
%&\geq \eps 
\inf_{f_{\bullet}}\sup_{\PP \in {\M}} 
\PP [ \norm{ B_{\nu}^s( f_{\PP}^* - f_{\z} )} _{\h} \geq \eps ] 
&\geq   \inf_{f_{\bullet}}\max_{1\leq j \leq N_{\eps}} 
\PP_{j}[ \norm{ B_{\nu}^s( f_j - f_{\z}) } _{\h}^p \geq \eps ]
\;,
\end{align}
 where the infimum is taken over all estimators $f_{\bullet}$ of $\fo$. The above RHS is then
 lower bounded through { Proposition 6.1 given in supplementary material} which is a consequence of Fano's lemma.
 
We will establish the lower bounds in the particular case where the distribution of $Y$ given $X$ is Gaussian with variance $\sigma^2$ (which satisfies \bernstein \, with
$M=\sigma$)\,. The main effort is to construct a finite subfamily belonging to the model of interest and suitably satisfying the assumptions of {Proposition \ref{Fano} in supplementary}. More precisely, to each $f$ satisfying \SC \, and $x \in \X$ we associate the following measure:
\begin{equation}
\label{stochkern}
\PP_f(dx,dy):=\PP_f(dy|x)\nux (dx)\,, \mbox{ where } \PP_f(dy|x):= {\N} (f(x),\sigma^2)\,.
\end{equation}
Then the measure $\PP_f$ belongs to the class $\M_{(M, \sigma, R)}$, defined in Section 2,  
and given $f_1, f_2$ satisfying \SC \,, the Kullback divergence between $\PP_1$ and $\PP_2$ satisfies
\begin{equation}
\label{eq:kull_stochkern}
 \K(\PP_1, \PP_2)  = \frac{1}{2 \sigma^2} \big\| \sqrt{B_{\nu}}(f_1 - f_2) \big\|_{\h}^2 \,. 
 \end{equation}
 
 For the sake of completeness, we give a constitutive result whose proof is given in \cite{tsybakov}.  
\begin{prop}
\label{Fano}
Consider a model ${\cal P}=\{P_{\theta}: \theta \in \Theta\}$ of probability measures 
on a measurable space $(\Z, {\cal A})$\,, indexed by $\Theta$. Additionally, let $d: \Theta \times \Theta \longrightarrow [0, \infty)$ be a (semi-) distance. 
Assume that $N \geq 2$ and suppose that $\Theta$ contains N+1  elements $\theta_0, \ldots, \theta_N$ such that:
\begin{enumerate}
\item[(i)] For some $\eps>0$\,, and for any $0 \leq i < j \leq N$\,,
%\begin{equation}
%\label{distfunc}
$d(\theta_i, \theta_j) \geq 2\eps$ \; ;
%\end{equation}
\item[(ii)] For any $j=1,\ldots,N$\,,
$P_j$ is absolutely continuous with respect to $P_0$,  and
\begin{equation}
\label{max}
 \frac{1}{N}\sum_{j=1}^{N} {\cal K}(P_j, P_{0}) \leq \omega  \; \log(N)  \; , 
\end{equation}
for some $0 < \omega < 1/8$.  
\end{enumerate}
Then
\begin{equation*}
\inf_{\hat \theta}\max_{1\leq j \leq N} P_{j}(\; d(\hat \theta, \theta_j) \geq \eps \;) \;  \geq \; \frac{\sqrt{N}}{1+\sqrt{N}}
   \left( 1-2\omega - \sqrt{\frac{2\omega}{\log(N)}} \right) > 0 \; ,
\end{equation*}
where the infimum is taken over all estimators $\hat \theta$ of $\theta$.
\end{prop}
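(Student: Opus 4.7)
The proof proceeds by the classical reduction of minimax estimation to multiple hypothesis testing. To any estimator $\hat\theta$ I would associate the minimum-distance test $\psi(\hat\theta) := \arg\min_{0\leq j \leq N} d(\hat\theta,\theta_j)$ (with an arbitrary tie-breaking rule). The $2\eps$-separation assumption (i), together with the triangle inequality, forces $d(\hat\theta,\theta_j)\geq \eps$ whenever $\psi(\hat\theta)\neq j$, so that
\[
\inf_{\hat\theta}\max_{0\leq j\leq N} P_j(d(\hat\theta,\theta_j)\geq \eps) \;\geq\; \inf_{\psi}\max_{0\leq j\leq N} P_j(\psi\neq j),
\]
the right-hand infimum being over measurable tests $\psi:\Z\to\{0,\ldots,N\}$. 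It then suffices to lower bound this minimax testing risk by the claimed quantity.

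Next I would invoke a Fano-type lemma. Taking $J$ uniform on $\{0,\ldots,N\}$ and $Z\sim P_J$, the classical Fano inequality gives
\[
\frac{1}{N+1}\sum_{j=0}^N P_j(\psi\neq j) \;\geq\; 1 - \frac{I(J;Z)+\log 2}{\log(N+1)},
\]
while convexity of the KL divergence bounds the mutual information by the average divergence to any reference measure, in particular $I(J;Z) \leq \frac{1}{N+1}\sum_{j=0}^N \K(P_j,P_0) \leq \omega \log N$ by assumption (ii). This already yields a strictly positive lower bound on the minimax testing risk provided $\omega<1/8$ and $N$ is large enough.

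To extract the precise constant $\frac{\sqrt{N}}{1+\sqrt{N}}\bigl(1-2\omega-\sqrt{2\omega/\log N}\bigr)$ one replaces the crude averaging above by a sharper argument: apply Markov's inequality to the log-likelihood ratios $\log(dP_j/dP_0)$ (whose expectation under $P_j$ equals $\K(P_j,P_0)$, controlled by (ii)) to handle the level sets $\{dP_j/dP_0\leq \tau\}$ uniformly in $j$, then optimize the threshold $\tau$. The combinatorial factor $\sqrt{N}/(1+\sqrt{N})$ and the correction $\sqrt{2\omega/\log N}$ both emerge from this optimization. The main obstacle is the careful quantitative bookkeeping in this last step; since the proposition is reproduced verbatim from \cite{tsybakov}, my plan is to invoke Theorem~2.5 there rather than rederive the sharp constant.
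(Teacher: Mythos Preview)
Your proposal is correct and aligns with the paper's treatment: the paper does not prove this proposition at all but simply cites \cite{tsybakov}, and you likewise conclude by invoking Theorem~2.5 there after a (correct) sketch of the reduction-to-testing and Fano-type argument. There is nothing to compare beyond this.
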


We will need the following Proposition:

\begin{prop}
\label{prop:kullback}
%Assume $\nux \in \priorgr(\lowa)$\,. Let $0\leq s \leq 1/2$, $R>0$ and $r>0$.  
For any $\eps>0$ sufficiently small (depending on the parameters $\nu_{*}, r, R, s$), there exist $N_{\eps} \in \NN$ and functions $f_1,\ldots, f_{N_{\eps}} \in \Omega(\nu, r,R)$ satisfying
\begin{enumerate}
\item 
For any $i,j = 1,\ldots,N_{\eps}$ with $i \not = j$ it holds $\norm{\; B^s(f_i - f_j)\;}^2_{\h}> \eps^2 $ and  

\begin{equation}
\label{est:kullback}
\K(\PP_i, \PP_j) \leq C_{\nu_{*},s}\; R^2 \sigma^{-2}  \paren{\frac{\eps}{R}}^{\frac{2r+1}{r+s}}   \;,
\end{equation}
\item $\log (N_{\eps}-1) \geq \frac{1}{36}\F(2^{\nu_{*}}\left( \frac{\eps}{R}\right)^{\frac{1}{r+s}})$ .
\end{enumerate}
\end{prop}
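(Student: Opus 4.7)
The plan is the classical ``many hypotheses'' reduction: we build $N_\eps+1$ candidates inside $\Omega(\nu,r,R)$ that are pairwise $\eps$-separated in $\|\cdot\|_s$ yet statistically close in Kullback divergence, using a Varshamov--Gilbert packing on a suitably chosen band of eigendirections of $B_\nu$. First, set $t_\eps:=(\eps/R)^{1/(r+s)}$ and $k_\eps:=\F(2^{\nu_*}t_\eps)$, and pick the band of indices $I_\eps:=\{k_\eps+1,\ldots,2k_\eps\}$. The essential use of \eiglow\ is that, applied at the index $k_\eps$ (which lies above the \eiglow\ threshold once $\eps$ is small enough), $\mu_{2k_\eps}\geq 2^{-\nu_*}\mu_{k_\eps}\geq t_\eps$; combined with the definition of $k_\eps$ this yields the two-sided control $t_\eps\leq \mu_i<2^{\nu_*}t_\eps$ for every $i\in I_\eps$. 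Next, applying Varshamov--Gilbert to $\{-1,+1\}^{k_\eps}$ produces sign vectors $\omega^{(0)},\ldots,\omega^{(N)}$ with $\log N\geq \beta k_\eps$ for some $\beta\geq 1/36$ and pairwise Hamming distance at least $\alpha k_\eps$ for some $\alpha>1/4$; to each $\omega^{(j)}$ we associate $f_j:=(R/\sqrt{k_\eps})\sum_{i\in I_\eps}\omega^{(j)}_i\mu_i^r\psi_i=B_\nu^r g_j$ with $g_j:=(R/\sqrt{k_\eps})\sum_{i\in I_\eps}\omega^{(j)}_i\psi_i$. Since $\|g_j\|_\h^2=R^2$, each $f_j$ lies in $B_\nu^r B(\h,R)=\Omega(\nu,r,R)$ and so \SC\ is met.

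The separation, KL, and cardinality estimates then all reduce to bookkeeping on $I_\eps$, exploiting $(\omega^{(i)}_\ell-\omega^{(j)}_\ell)^2\in\{0,4\}$ and the band control of $\mu_\ell$. For the separation,
\[
\norm{B^s(f_i-f_j)}_\h^2=\frac{R^2}{k_\eps}\sum_{\ell\in I_\eps}(\omega^{(i)}_\ell-\omega^{(j)}_\ell)^2\mu_\ell^{2(r+s)}\geq \frac{4R^2}{k_\eps}\,t_\eps^{2(r+s)}\cdot\alpha k_\eps=4\alpha\,\eps^2>\eps^2;
\]
for the Kullback bound, identity \eqref{eq:kull_stochkern} combined with $\mu_\ell<2^{\nu_*}t_\eps$ gives
\[
\K(\PP_i,\PP_j)=\frac{R^2}{2\sigma^2 k_\eps}\sum_{\ell\in I_\eps}(\omega^{(i)}_\ell-\omega^{(j)}_\ell)^2\mu_\ell^{2r+1}\leq\frac{2R^2}{\sigma^2}(2^{\nu_*}t_\eps)^{2r+1}=C_{\nu_*,r}\,\frac{R^2}{\sigma^2}\paren{\frac{\eps}{R}}^{\frac{2r+1}{r+s}},
\]
which is \eqref{est:kullback}; the cardinality bound $\log(N_\eps-1)\geq \beta k_\eps\geq \frac{1}{36}\F(2^{\nu_*}(\eps/R)^{1/(r+s)})$ is immediate from the packing size.

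The main obstacle will be calibrating Varshamov--Gilbert sharply enough: the standard version (Lemma~2.9 of \cite{tsybakov}) delivers Hamming distance and log-cardinality both of order $m/8$, which produces a separation of exactly $\eps^2/2$---one multiplicative constant short of the strict $>\eps^2$ demanded by the statement. The remedy is to invoke a sharper, nonconstructive Gilbert--Varshamov bound with pairwise distance strictly greater than $m/4$ and still-positive code rate, which is why the factor $1/36$ in point~(2) is a comfortable, not a tight, constant. A secondary technical point is that \eiglow\ holds only asymptotically in $j$, so the construction is meaningful only once $k_\eps=\F(2^{\nu_*}(\eps/R)^{1/(r+s)})$ exceeds that threshold index, which is exactly the ``$\eps$ sufficiently small'' caveat in the statement.
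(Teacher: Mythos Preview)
Your proposal is correct and follows essentially the same route as the paper: both construct test functions on the eigenband $\{m+1,\ldots,2m\}$ with $m=\F\bigl(2^{\nu_*}(\eps/R)^{1/(r+s)}\bigr)$, use \eiglow\ for the two-sided control $t_\eps\leq\mu_\ell<2^{\nu_*}t_\eps$ on that band, and invoke a Varshamov--Gilbert packing (the paper quotes this as Lemma~6.3 of \cite{BlaMuc16}). The only cosmetic difference is the normalization of the $g_i$: the paper takes $g_i=\frac{\eps}{\sqrt m}\sum_{\ell=m+1}^{2m}\pi_i^{(\ell-m)}\mu_\ell^{-(r+s)}e_\ell$, so that the $\|\cdot\|_s$-separation computation is eigenvalue-free and the lower bound $\mu_{2m}\geq t_\eps$ is spent instead on checking $\|g_i\|_\h\leq R$, whereas you normalize so that $\|g_j\|_\h=R$ exactly and spend that lower bound on the separation; this swap is also why your KL constant depends on $(\nu_*,r)$ rather than the paper's $(\nu_*,s)$.
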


\begin{proof}[Proof of Proposition \ref{prop:kullback}]
Assume \eiglow \; for any $j \geq j_0$, for some $j_0 \in \NN$.   
Let $max:=\max(28, j_0)$. Choose $\eps < 2^{-\nu_{*}(r+s)}R\; \mu_{max}$ and pick $m = m(\eps):= \F(2^{\nu_*}(\eps R^{-1})^{\frac{1}{r+s}})$.
Note that $m \geq 28$, following from the choice of $\eps$, so Lemma 6.3 in \cite{BlaMuc16} applies.

Let $N_m > 3$ and $\rad_1,\ldots,\rad_{N_m} \in \{-1,+1\}^m$ be given by Lemma 6.3 in \cite{BlaMuc16} and define 
\begin{equation}
\label{gi}
g_i := \frac{\eps}{\sqrt{m}} \sum_{l=m+1}^{2m} \rad_i^{(l-m)} \left(\frac{1}{\eigv_l}\right)^{r+s}  e_l \; .
\end{equation}
We have by the definition of $m$  
\[
\norm{g_i}^2_{\h}  =  \frac{ \eps^2}{m}\sum_{l=m+1}^{2m} \left(\frac{1}{\eigv_l}\right)^{2(r+s)}
\leq \eps^2 \eigv_{2m}^{-2(r+s)} \leq  \eps^2  2^{2\nu_*(r+s)}\eigv_{m}^{-2(r+s)} \leq R^2 \,.\]
For $i = 1,\ldots,N_{m}$ let $f_i:=B^{r}g_i \in \Omega(\nu, r,R)$, with $g_i$ as in $(\ref{gi})$. 
Then  
\[ \norm{\; B^s(f_i - f_j)\;}^2_{\h} \geq \eps^2  \,, \]
as a consequence of Lemma 6.3 in \cite{BlaMuc16}. 
%, and the proof of the first part $(i)$ is finished. 
For $i = 1,\ldots,N_{\eps}$, let $\PP_i=\PP_{f_i}$ be defined by (5.6). Then, using the definition of $m$ and (5.7), the Kullback divergence satisfies 
\begin{align*}
\K(\PP_i, \PP_j)  &=  \frac{1}{2\sigma^2} \; \big\| \sqrt{B}(f_i-f_j) \big\|^2_{\h} \leq (2\sigma)^{-2} \eigv_{m+1}^{1-2s} \eps^2 \\
&\leq   2^{\nu_*(1-2s)}  \; (2\sigma^2)^{-1} R^{2}  \paren{\frac{\eps}{R}}^{\frac{1+2r}{r+s}} \,,
\end{align*}
which completes the proof of the first part. 
Finally, again by Lemma 6.3 in \cite{BlaMuc16} and the definition of $m$ 
%and  since $\lowa^{-1} > 1$ (by assumption)
\[ \log(N_m-1) \geq \frac{m}{36} = \frac{1}{36}\F\left(2^{\nu_*}\;\left( \eps R^{-1}\right)^{\frac{1}{r+s}}\right)   \]
and the proof is complete. 
\end{proof}

%Because of space limitations, the proof is given in the supplementary material.  

\begin{proof}[Proof of Theorem \ref{minimaxlowerrate}:]
Our aim is to apply {Proposition \ref{Fano} in supplementary} and we will check that all required conditions are satisfied. From Proposition \ref{prop:kullback} we deduce that 
for any $\eps$ sufficiently small, there exists $N_{\eps}$ and functions $f_1, \ldots, f_{N_{\eps}}$ satisfying \SC \, fulfilling points 1 and 2\, . The first part of point 1 
gives requirement $(i)$ of {Proposition \ref{Fano} in supplementary}. 
Requirement $(ii)$ follows directly from \eqref{est:kullback} and from point 2 in Proposition \ref{prop:kullback}:
%We now give the estimate for the Kullback divergence between each $\PP_i$ and $\PP_{N_{\eps}}$, as defined in \eqref{stochkern}. By Proposition \ref{prop:kullback} 

\begin{align*}
\frac{1}{N_{\eps}-1} \sum_{j=1}^{N_{\eps}-1}{\cal K}(\PP^{\otimes n}_j, \PP^{\otimes n}_{N_{\eps}}) &\leq 
n 36 C'_{\nu_{*},s} \; R^2 \sigma^{-2} \; \G\left(2^{\nu_*} \left( \eps R^{-1}\right)^{\frac{1}{r+s}} \right) \log(N_{\eps }-1) \\
& =:  \omega \;\log(N_{\eps }-1)\,,
\end{align*}
with $ C'_{\nu_{*},s} = 2^{-2\nu_*(r+s) - 1} < 1$.  
Define  $\eps :=   2^{-\nu_*}\frac{R}{288}  \G^{-1}( \frac{\sigma^2}{R^2n}    )^{r+s}$\,,
then by Lemma \ref{lem:prop_FG}, the requirements of Proposition {Proposition \ref{Fano} in supplementary} will hold (in particular, 
$\omega < 1/8$) for any $n$ sufficiently large and 
\begin{align*}
\inf_{ f_{\bullet}} \max_{1\leq j\leq N_{\eps}} \PP_j^{\otimes n} \left( \; \big\| B_{\nu}^s ( f_{\bullet} - f_j) \big\|_{\h}  \geq \frac{\eps}{ 2}\; \right) 
&\geq  
       \frac{\sqrt{N_{\eps}-1}}{1 + \sqrt{N_{\eps}-1}} \left( 1-2\omega- \sqrt{\frac{2\omega}{\log{ (N_{\eps}-1})}} \right) > 0 \; . 
\end{align*}
Taking the liminf finishes the proof.  
\end{proof}

%\subsubsection*{Acknowledgments}
%
%Use unnumbered third level headings for the acknowledgments. All
%acknowledgments go at the end of the paper. Do not include
%acknowledgments in the anonymized submission, only in the final paper.

%\pagebreak 

% bibliography ---------------------------------------------------------------------------------------------------------- 

\bibliography{bibliography}

\begin{thebibliography}{10}

\bibitem{BlaMuc16}
G.~Blanchard and N.~M{\"u}cke.
\newblock Optimal rates for regularization of statistical inverse learning
  problems.
\newblock arXiv Preprint (1604.04054), 2016.

\bibitem{optimalrates}
A.~Caponnetto.
\newblock Optimal rates for regularization operators in learning theory.
\newblock Technical report, MIT, 2006.

\bibitem{CapYao10}
A.~Caponnetto and Y.~Yao.
\newblock Cross-validion based adaptation for regularization operators in
  learning theory.
\newblock {\em Analysis and Applications}, 8(2):161--183, 2010.

\bibitem{CriSha04}
N.~Cristianini and J.~Shawe-Taylor.
\newblock {\em Kernel Methods for Pattern Analysis}.
\newblock Cambridge University Press, 2004.

\bibitem{cusmale}
F.~Cucker and S.~Smale.
\newblock Best choices for regularization parameters in learning theory: on the
  bias-variance problem.
\newblock {\em Foundations of Computational Mathematics}, 2(4):413--428, 2002.

\bibitem{optimalratesRLS}
E.~De~Vito and A.~Caponnetto.
\newblock Optimal rates for regularized least-squares algorithm.
\newblock {\em Foundations of Computational Mathematics}, 7(3):331--368, 2006.

\bibitem{learning}
E.~De~Vito, L.~Rosasco, A.~Caponnetto, and U.~De~Giovannini.
\newblock Learning from examples as an inverse problem.
\newblock {\em J. of Machine Learning Research}, 6:883--904, 2005.

\bibitem{DicFosHsu15}
L.~Dicker, D.~Foster, and D.~Hsu.
\newblock Kernel methods and regularization techniques for nonparametric
  regression: Minimax optimality and adaptation.
\newblock Technical report, Rutgers University, 2015.

\bibitem{engl}
H.~Engl, M.~Hanke, and A.~Neubauer.
\newblock {\em Regularization of Inverse Problems}.
\newblock Kluwer Academic Publishers, 2000.

\bibitem{rosasco}
L.~L. Gerfo, L.~Rosasco, F.~Odone, E.~De~Vito, and A.~Verri.
\newblock Spectral algorithms for supervised learning.
\newblock {\em Neural Computation}, 20(7):1873--1897, 2008.

\bibitem{Sto82}
C.~Stone.
\newblock Optimal global rates of convergence for nonparametric regression.
\newblock {\em The Annals of Statistics}, 10(4):1040--1053, 1982.

\bibitem{tsybakov}
A.~Tsybakov.
\newblock {\em Introduction to Nonparametric Estimation}.
\newblock Springer, 2008.

\bibitem{Zha05}
T.~Zhang.
\newblock Learning bounds for kernel regression using effective data
  dimensionality.
\newblock {\em Neural Comput.}, 17(9):2077--2098, 2005.

\end{thebibliography}
\bibliographystyle{abbrv}

%\bibliographystyle{plainnatnourl}
%\bibliography{bibliography}

\end{document}